%% file: aaai2027.tex
\documentclass[letterpaper]{article} 
\usepackage{aaai2027}  
\usepackage[hyphens]{url}  
\usepackage{graphicx} 
\usepackage{natbib}  
\usepackage{caption} 
\usepackage{algorithm}
\usepackage{algorithmic}
\newcommand{\ourmethod}{{HERO}}
\usepackage{newfloat}
\usepackage{listings}
\DeclareCaptionStyle{ruled}{labelfont=normalfont,labelsep=colon,strut=off} 
\floatstyle{ruled}
\newfloat{listing}{tb}{lst}{}
\floatname{listing}{Listing}

\usepackage{booktabs}
\usepackage{amsmath,amssymb,amsfonts}%
\usepackage{amsthm}%
\usepackage{mathrsfs}%
\newtheorem{definition}{Definition}%
\theoremstyle{plain}%
\newtheorem{theorem}{Theorem}
\newtheorem{lemma}[theorem]{Lemma}%
\newtheorem{corollary}[theorem]{Corollary}%

\theoremstyle{definition}%
\newtheorem{remark}{Remark}%

\theoremstyle{remark}%

\usepackage{xcolor}

\title{Overcoming the Weakest-Link Effect in LLM-Driven Program Optimization via Heterogeneous Edit Recombination}
\author{
    Jingwen Fu\equalcontrib\textsuperscript{\rm 1,\rm 2},
    Zhen Liu\equalcontrib\textsuperscript{\rm 3},
    Yuhan Liu\textsuperscript{\rm 4},\\
    He Zhang\textsuperscript{\rm 1,\rm 2}\corresponding,
    Nanning Zheng\textsuperscript{\rm 3}\corresponding
}
\affiliations{
    \textsuperscript{\rm 1}Zhongguancun Academy\\
    \textsuperscript{\rm 2}Zhongguancun Institute of Artificial Intelligence\\
    \textsuperscript{\rm 3}National Key Laboratory of Human-Machine Hybrid Augmented Intelligence, and Institute of Artificial Intelligence and Robotics, Xi'an Jiaotong University\\
    \textsuperscript{\rm 4}Xiaomi Inc\\
    jwfu99@gmail.com, zhanghe@zgci.ac.cn, nnzheng@mail.xjtu.edu.cn
}

\begin{document}

\maketitle

\begin{abstract}

Large language models (LLMs) are increasingly used to solve complex problems by searching over program space, offering a general paradigm for scientific problems that can be naturally represented and solved as programs. Despite recent progress, identifying effective optimization directions for a candidate program remains challenging. By analogy with automatic differentiation, existing methods typically guide the search using a textual ``gradient'': a first-order update direction expressed as textual edits. Such gradients are inferred either from previously evaluated programs or from LLM-generated feedback on the implicit program-score mapping. However, these estimates become increasingly unreliable as the program--score mapping grows more complex, limiting their practical utility.
We argue that explicit gradients are not essential for effective program optimization. Leveraging their prior knowledge, LLMs can propose plausible atomic edits directly from the current program, thereby enabling a zeroth-order optimization strategy. However, zeroth-order search suffers from a \textit{weakest-link effect}: when a bundle of edits is accepted or rejected as a whole, a single harmful edit can negate the benefits of all remaining edits. To address this issue, we introduce \ourmethod, a program optimizer that prompts an LLM to generate diverse, non-overlapping atomic edits and then systematically selects and composes them into coherent program improvements using evaluator scores. We evaluate \ourmethod~across algorithmic problems, strategy games, the design of LLM-based agentic systems, and robotic path planning. Across these domains, \ourmethod~consistently discovers higher-scoring programs and converges substantially faster than prior LLM-based optimizers, while consuming fewer tokens.
\end{abstract}


\section{Introduction}


Optimization over program space provides a general framework for solving complex problems, ranging from mathematical construction to agent design. Such problems can often be expressed naturally as programs, which are executable, automatically evaluable, and compositional~\citep{zhang2024revolve,yuksekgonul2025optimizing,ding2025scaling}. LLMs are well suited to this setting as they can reason about program semantics, draw on broad prior knowledge, and iteratively revise candidate solutions~\citep{zhou2025riot,madaan2023self,zhang2025codegrad,FunSearchNature2024}. The central difficulty, however, is how to steer the search. At each iteration, the optimizer must determine an \emph{optimization direction} to guide how the current program should be modified to improve its performance~\citep{nie2023importance,liu2026structured,liu2026essence,liu2026instruction}. 
Existing methods typically infer this direction from the observed performance of previously evaluated candidates, which we refer to as \emph{first-order information}~\citep{yuksekgonul2025optimizing,cheng2024trace}. The underlying intuition is that the evaluation history provides a local description of the objective: by identifying which past modifications helped or hurt, the optimizer can infer a useful update direction, analogous to estimating a numerical gradient. Consequently, access to evaluation scores is often treated as essential.

\begin{figure}[t]
    \centering    \includegraphics[width=0.8\linewidth]{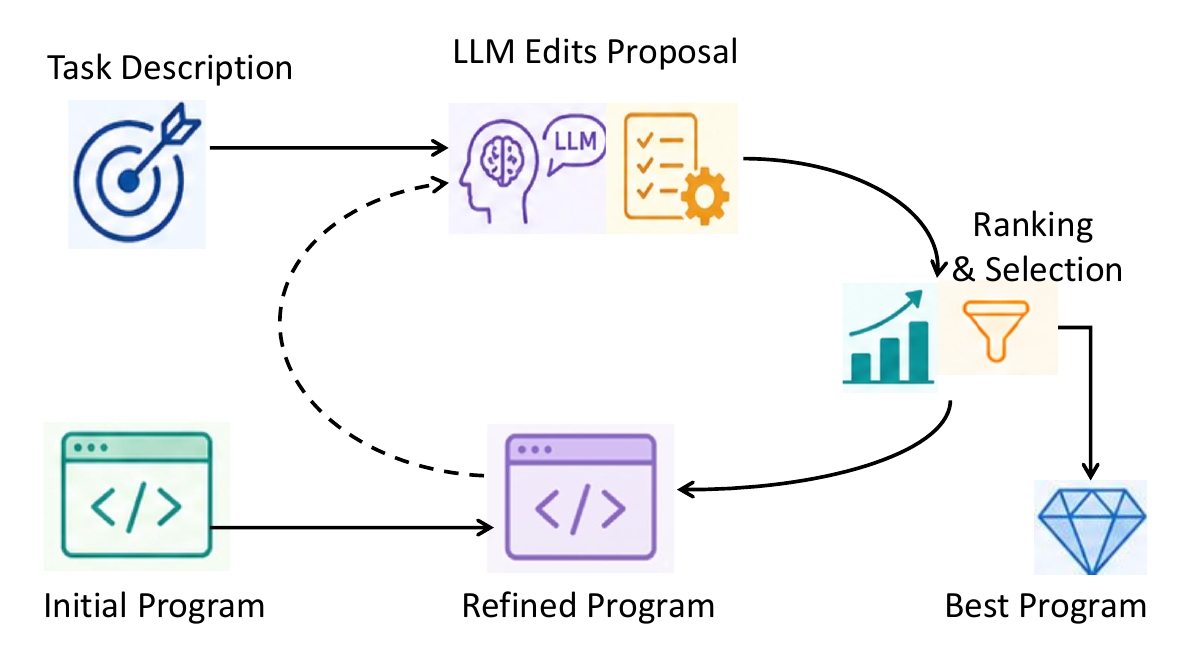}
    \caption{Workflow of \ourmethod.
    Given a task description, \ourmethod{} iteratively improves the program by applying selectively recombined edits proposed by the LLM.
    }
    \label{fig:paradigm}
    \vspace{-0.2in}
\end{figure}

\begin{figure*}
    \centering
    \includegraphics[width=0.9\textwidth]{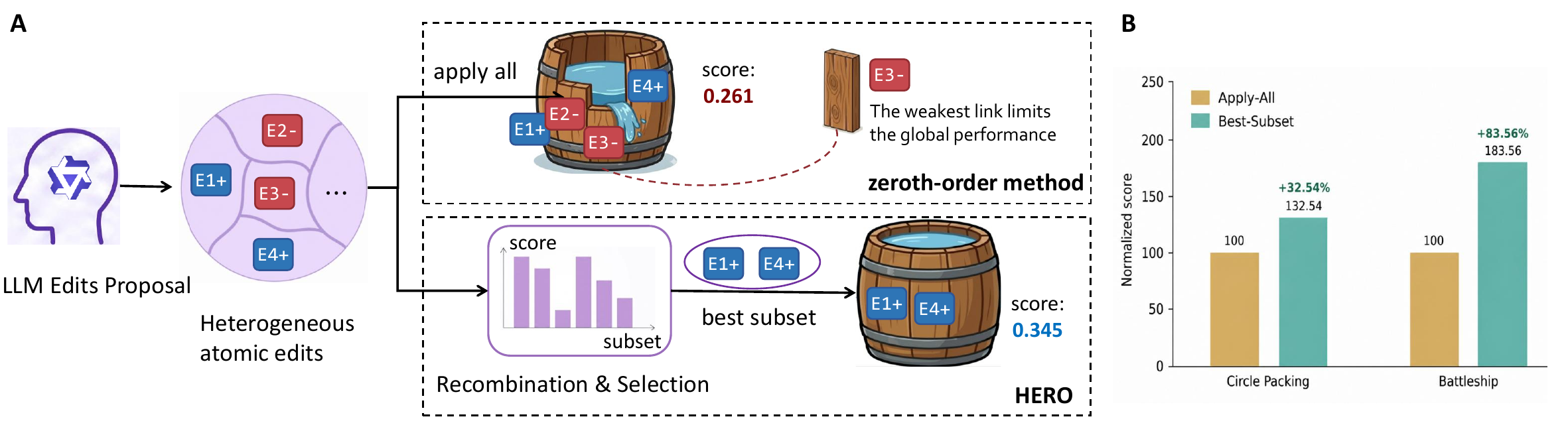}
    \caption{Overview of \ourmethod.
    \textbf{(A)} Working mechanism. At each step, the LLM generates multiple non-overlapping, heterogeneous atomic edits. Exsiting methods apply or reject all edits by the global score, hence suffering from the weakest-link effect. \ourmethod{} instead recombines the edits and selects the subset that maximizes performance, neutralizing the influence of harmful edits.
    \textbf{(B)} Best-subset recombination mitigates the weakest-link effect and leads to substantial performance improvement.
    }
    \label{fig:framework}
    \vspace{-0.1in}
\end{figure*}


We posit that this assumption becomes unreliable as the program--score mapping grows more complex. 
Inferring an update from performance can be regarded as a credit-assignment problem: the optimizer must attribute a scalar score to the many interacting components of a program and determine which components should be changed. 
When the program--score mapping is simple, this attribution could be tractable. 
Nevetheless, for those open scientific problems remained underexplored by LLMs,
the candidate programs typically involve long execution traces, tightly coupled components, or stochastic outcomes. 
As a result, the same evaluator score can be consistent with several different explanations, making the inferred attribution or `gradient direction' fundamentally ambiguous. 
We theoretically formulate this limitation in Appendix A, and show that a finite history of scores cannot uniquely determine the correct update direction when the decoder from scores to update labels is unconstrained.

The limitations of score-based reasoning raise the central question of this paper: \textit{Can an LLM-based optimizer remain effective without reasoning about ground-truth performance?} 
Our starting point is a simple empirical evidence observed in prior research~\citep{yang2024swe,zhang2024autocoderover,xia2024agentless}: even without performance feedback, LLMs possess substantial implicit knowledge of how a candidate solution might be improved (Fig.~\ref{fig:paradigm}). Given only a task description and a candidate program, they can often propose plausible, localized \emph{atomic edits}. 
The main difficulty, we argue, lies not in generating individual edits but in combining them. 
As shown in Fig.~\ref{fig:framework}A, a straightforward approach is to apply all proposed edits together and accept or reject the resulting program according to its global score. 
However, a batch may contain both beneficial and harmful edits since edit generation is stochastic and unguided by performance feedback. 
Under this all-or-nothing decision rule, a single harmful edit can outweigh the gains from the others, causing the entire batch, including its useful edits, to be discarded. We refer to this failure mode as the \textbf{weakest-link effect}. Simply instructing the model to generate only beneficial edits does not resolve the problem, because the model cannot reliably determine the effect of each edit before evaluation without ground-truth feedback. Harmful edits must therefore be filtered during selection, by evaluating modifications at a finer granularity than the full batch.


Motivated by this observation, we introduce \textbf{\ourmethod} (\textbf{H}eterogeneous \textbf{E}dit \textbf{R}ecombination \textbf{O}ptimizer), which separates edit generation from edit selection (Fig.~\ref{fig:framework}A). During generation, the LLM proposes optimization directions using its internal knowledge; during selection, scalar evaluations identify effective combinations of those directions. At each iteration, \ourmethod{} provides the LLM with only the task description and the current program, deliberately withholding its ground-truth performance, and asks it to generate a set of \emph{heterogeneous, non-overlapping atomic edits}. Instead of applying the entire set, \ourmethod{} evaluates recombinations of these edits and retains the best-performing subset, preventing a single harmful edit from overriding beneficial ones. 
Ground-truth performance is used only as a scalar criterion for comparing edit subsets and evaluating the resulting program; it is never interpreted by the LLM or used to infer an update direction. In this sense, \ourmethod{} performs zeroth-order optimization while converting the LLM's latent knowledge of plausible edits into reliable program improvements. We evaluate \ourmethod{} in four domains: Circle Packing~\citep{novikov2025alphaevolve}, Strategy Games~\citep{dor2018strategy}, Agent System Design~\citep{zhangaflow}, and Path Planning~\citep{meng2024llm,xiao2023llm}. Together, these domains cover combinatorial construction, adversarial and partially observable decision making, agentic reasoning, and planning.


Overall, this paper makes three contributions:
\begin{itemize}
    \item We identify a key bottleneck in LLM-based program optimization: the challenge is not generating promising update directions, but composing heterogeneous edits without suffering from the \emph{weakest-link effect} induced by all-or-nothing acceptance.
    \item We introduce \ourmethod, which separates edit generation from edit selection. It generates heterogeneous, non-overlapping atomic edits and selectively recombines them using scalar evaluations, without requiring the LLM to reason about ground-truth performance.
    \item Across four domains, \ourmethod{} achieves faster convergence, lower token consumption, and higher final performance. 
\end{itemize}

Finally, our results are not a claim that first-order information is useless. By charting what is achievable \emph{without} it, we aim to recalibrate the community's estimate of its value and to inform when, and how, it is worth the cost of methods that \emph{do} leverage it. The full discussion of related work is deferred to Appendix D.

\section{Method}

\subsection{Formulation and Intuition}
\label{subsec:formulation}

\paragraph{Problem setup.}
Let $\mathcal{X}$ be the solution space and $f:\mathcal{X}\to\mathbb{R}^{+}$ a target (performance) function. The optimizer seeks a maximizer
\begin{equation}
\label{eq:objective}
x^{\star}\in\operatorname*{arg\,max}_{x\in\mathcal{X}} f(x),
\end{equation}
and proceeds iteratively: at step $t$ it maintains an incumbent $x_t\in\mathcal{X}$ together with an \emph{evaluation history}
\[
H_t \;=\; \bigl\{(x_i,\,f(x_i))\bigr\}_{i=1}^{t}.
\]
A single step proposes a \emph{modification} $\alpha:\mathcal{X}\to\mathcal{X}$ and moves to $\alpha(x_t)$. To contrast \ourmethod{} with prior work precisely, we first classify the information available to such a proposal, then explain why \ourmethod{} deliberately uses only part of it.

\paragraph{First- vs.\ zeroth-order information.}
We distinguish two sources of information a proposal mechanism may exploit. \emph{First-order information} is any functional of the evaluation history $H_t$ used to estimate the local ascent direction $\nabla f(x_t)$~\citep{cheng2024trace, zhanghessiangrad}. The aim is not to reconstruct the entire gradient field but only the direction at the incumbent $x_t$; this is presumed feasible because $H_t$ implicitly constrains $f$. The difficulty is that recovering $\nabla f(x_t)$ from $H_t$ is an \emph{inverse} problem, and Appendix shows it is ill-posed once $f$ is complex: a finite history is consistent with many distinct directions, so the inferred direction is unreliable precisely in the hard regime. \emph{Zeroth-order information}, by contrast, is any information independent of $H_t$, namely that carried by the task description and the model's internal prior. Formally, a first-order optimizer samples a modification from a distribution conditioned on the full context,
\[
\alpha\sim \pi\bigl(\cdot \mid \mathrm{Task},\,H_t,\,x_t\bigr),
\]
drawing on both sources, whereas a purely zeroth-order optimizer conditions only on $(\mathrm{Task},\,x_t)$.

\paragraph{A latent-objective model of zeroth-order proposals.}
As a zeroth-order optimizer, \ourmethod{} samples $\alpha\sim\pi\bigl(\cdot\mid \mathrm{Task},\,x_t\bigr)$
withholding all scores, so the proposed direction cannot be inferred from $H_t$ and must originate from the model's prior. To analyze this prior we adopt the following modeling assumption---a tractable abstraction, not a claim about internal mechanics. We posit a finite set of \emph{latent objectives} $g_1,\dots,g_h:\mathcal{X}\to\mathbb{R}$, each encoding a property the model associates with quality (e.g., ``avoid boundary waste'' in packing, or ``protect corners'' in Othello). We model each sampled modification $\alpha$ as an approximate ascent step on a \emph{random mixture} of these objectives:
\begin{equation}
\label{eq:mixture}
g^{\alpha}_{\theta}=\sum_{i=1}^{h}\theta^{\alpha}_i\, g_i,
\qquad
\alpha(x_t)-x_t \;\approx\; \eta\,\nabla g^{\alpha}_{\theta}(x_t),
\end{equation}
where $\eta>0$ is a step size and the weight vector $\theta^{\alpha}=(\theta^{\alpha}_1,\dots,\theta^{\alpha}_h)$ is drawn afresh for each sample, reflecting the stochasticity of generation. Under this model a single edit is a noisy, partially correct direction: it ascends \emph{some} objective the model believes in, but not necessarily $f$.

\paragraph{Objective: aligning the aggregate with $f$.}
Since the mixture weights are random, no single $\alpha$ is guaranteed to increase $f$. The problem \ourmethod{} solves is therefore one of \emph{aggregation}: select a sub-collection $\{\alpha_1,\dots,\alpha_n\}$ whose combined direction is positively aligned with the true objective near $x_t$, i.e., there exists $c>0$ such that
\begin{equation}
\label{eq:align}
\nabla\!\sum_{i=1}^{n} g^{\alpha_i}_{\theta}(x_t)\;\approx\; c\,\nabla f(x_t).
\end{equation}
Equation~\eqref{eq:align} formalizes the intuition that combining many partially correct edits cancels their objective-specific (off-$f$) components while reinforcing the shared component that tracks $f$, converting diffuse prior knowledge into a usable ascent direction without ever reasoning over scores.

\paragraph{The weakest-link effect.}
A naive aggregation applies \emph{all} proposed edits and accepts the result by its global score; we show this squanders the very alignment that Eq.~\eqref{eq:align} targets. Let $\{\alpha_1,\dots,\alpha_n\}$ be the proposals, write the joint update $\alpha_{1:n}=\alpha_1\circ\cdots\circ\alpha_n$, and let $\delta_i \;=\; f\bigl(\alpha_i(x_t)\bigr)-f(x_t)$
denote the marginal gain of edit $i$. When the edits act on disjoint parts of the solution---as enforced in Section~\ref{subsec:GSFramework}---their effects are approximately additive,
\begin{equation}
\label{eq:additive}
f\bigl(\alpha_{1:n}(x_t)\bigr)-f(x_t)\;\approx\;\sum_{i=1}^{n}\delta_i .
\end{equation}
\emph{All-or-nothing} acceptance keeps the bundle iff this sum is positive. Because the model receives no feedback, it cannot certify any edit, so some $\delta_j<0$ is essentially unavoidable; by Eq.~\eqref{eq:additive}, a single sufficiently harmful edit---one with $\delta_j<-\sum_{i\neq j}\delta_i$ when the remaining edits are beneficial---renders the total non-positive and triggers rejection of the \emph{entire} bundle, discarding every beneficial edit along with the harmful one. We call this the \textbf{weakest-link effect}: the fate of the bundle is dictated by its worst component. It explains the puzzle from Section~1---the model proposes sound edits, yet all-or-nothing acceptance prevents them from accumulating into gains. The remedy is to evaluate at a finer granularity than the full bundle: the subset $S^{\star}=\operatorname*{arg\,max}_{S\subseteq[n]}\sum_{i\in S}\delta_i$ retains exactly the beneficial edits $\{i:\delta_i>0\}$ and attains gain $\sum_{i:\delta_i>0}\delta_i\ge\sum_{i=1}^{n}\delta_i$. This subset search is the core of \ourmethod{}, developed next.

\subsection{Generation--Selection Framework}
\label{subsec:GSFramework}

The analysis of Section~\ref{subsec:formulation} isolates two distinct concerns: \emph{proposing} candidate directions, which the model's prior already supports, and \emph{verifying} which of them to keep, which all-or-nothing acceptance handles poorly. \ourmethod{} decouples these into a knowledge-driven \emph{Generation} phase and a feedback-driven \emph{Selection} phase. Starting from an initial program $x_{0}\in\mathcal{X}$, it applies the composite update
\begin{equation}
\label{eq:update}
\begin{aligned}
    & x_{t} = \mathrm{Update}(x_{t-1}) = \bigl(\mathrm{Select}\circ\mathrm{Generate}\bigr)(x_{t-1}),
\end{aligned}
\end{equation}
and returns $x_{T}$ as the final solution. Generation supplies candidate edits from the model's prior alone, while selection uses ground-truth performance purely as a scalar criterion to retain the edits that help. We specify each phase in turn.

\paragraph{Generation phase.}
Queried with only the task description and the incumbent $x_{t-1}$---no scores---the model returns a set of atomic edits
\[
    A=\{\alpha_1,\dots,\alpha_n\},\qquad \alpha_i:\mathcal{X}\to\mathcal{X},
\]
whose cardinality $n$ is chosen by the model itself, so that granularity adapts to how much the program admits improvement. We require the edits to be \emph{heterogeneous} and pairwise \emph{commuting}: for all $i,j\in[n]$,
\begin{equation}
\label{eq:commute}
    \alpha_i \circ \alpha_j = \alpha_j \circ \alpha_i .
\end{equation}
Commutativity is what makes subset selection well posed. For any index subset $S\subseteq[n]$, define the recombined program
\[
    A_S(x)=\Bigl(\,\bigcirc_{i\in S}\alpha_i\,\Bigr)(x),
\]
the composition of the edits indexed by $S$. By Eq.~\eqref{eq:commute} this composition is independent of the order of application, so $A_S$ is unambiguous and an instruction such as ``keep edits $\{i,k\}$ but drop $j$'' is well defined; without commutativity the effect of a subset would depend on application order and on which other edits are present, and recombination would be ill posed. Commutativity thus turns recombination into a clean combinatorial choice over the $2^{n}$ subsets of disjoint, heterogeneous edits.

\paragraph{Selection phase.}
Given $A$, selection returns the best-performing subset at the incumbent $x_{t-1}$:
\begin{equation}
\label{eq:selectopm}
\begin{gathered}
S^{\star}\in\operatorname*{arg\,max}_{S \subseteq [n]} 
\; \widehat{D}_{S}\, f(x_{t-1}), \\
\widehat{D}_{S}\, f(x_{t-1})
=
\frac{f\!\left(A_S(x_{t-1})\right)-f(x_{t-1})}{\lVert S \rVert}.
\end{gathered}
\end{equation}
after which we set $x_t=A_{S^{\star}}(x_{t-1})$. Here $\widehat{D}_{S}\, f(x_{t-1})$ is the empirical directional improvement of $f$ along the recombined edit (the finite-sample analogue of the directional derivative $D_S f$), $A_S$ is as defined above, and $\lVert S \rVert$ is the update magnitude, set to $1$ throughout for simplicity. Two observations connect this rule to Section~\ref{subsec:formulation}. First, because the optimum ranges over \emph{all} subsets rather than the single full bundle $S=[n]$, harmful edits are excluded while beneficial ones are retained: under the additive approximation of Eq.~\eqref{eq:additive} with $\lVert S \rVert=1$, Eq.~\eqref{eq:selectopm} reduces to $\max_{S\subseteq[n]}\sum_{i\in S}\delta_i$, whose maximizer is exactly $\{i:\delta_i>0\}$. This is the precise mechanism by which \ourmethod{} neutralizes the weakest-link effect. Second, Eq.~\eqref{eq:selectopm} uses $f$ only as a black-box scalar to \emph{compare} candidates; it never asks \emph{why} one subset scores higher, keeping \ourmethod{} strictly zeroth-order in the sense of Section~\ref{subsec:formulation}.

\subsection{Design Details}
\label{sec:design_details}

HERO follows a score-free generation and evaluator-based selection procedure.
Given the task description and current program, the LLM produces localized,
pairwise non-overlapping \texttt{SEARCH}/\texttt{REPLACE} edits
\citep{novikov2025alphaevolve}. The non-overlapping constraint makes edit
composition order-independent and enables well-defined subset recombination.
Selection first removes edits that cause execution errors, then evaluates
candidate subsets and retains the highest-scoring recombination. We enumerate
all subsets when feasible; otherwise, we use a budgeted subset pool. For
expensive evaluator, we adopt an approximately order-preserving surrogate. The
evaluation budget therefore balances subset coverage against ranking
reliability. Evaluator scores are used only for subset comparison and are never
provided to the LLM. See more implementation details in
Appendix E.

\paragraph{Discussion. }
We close by stating precisely the role ground-truth performance plays in \ourmethod{}, since this is exactly what separates it from first-order methods and substantiates the zeroth-order claim made in the abstract and introduction. During optimization the signal enters at a single point---the comparison of edit subsets in Eq.~\eqref{eq:selectopm}---and at one further point afterward, the evaluation of the returned program $x_T$. It is never \emph{reasoned about}: \ourmethod{} extracts no explanation of \emph{why} one subset outscores another and infers no structural insight from the score. All directional knowledge originates from the model's prior (the Generation phase), whereas performance acts solely as a scalar comparison oracle (the Selection phase). This cleanly classifies \ourmethod{} as a \emph{zeroth-order} optimizer in the sense of Section~\ref{subsec:formulation}, and it explains why \ourmethod{} sidesteps the credit-assignment failure formalized in Appendix A: by \emph{ranking} candidates rather than inverting a scalar score back onto a solution's parts, it never attempts the ill-posed inversion that renders first-order reasoning unreliable in the hard regime.

\section{Experiments}
\begin{figure*}
    \centering
    \includegraphics[width=0.99\linewidth]{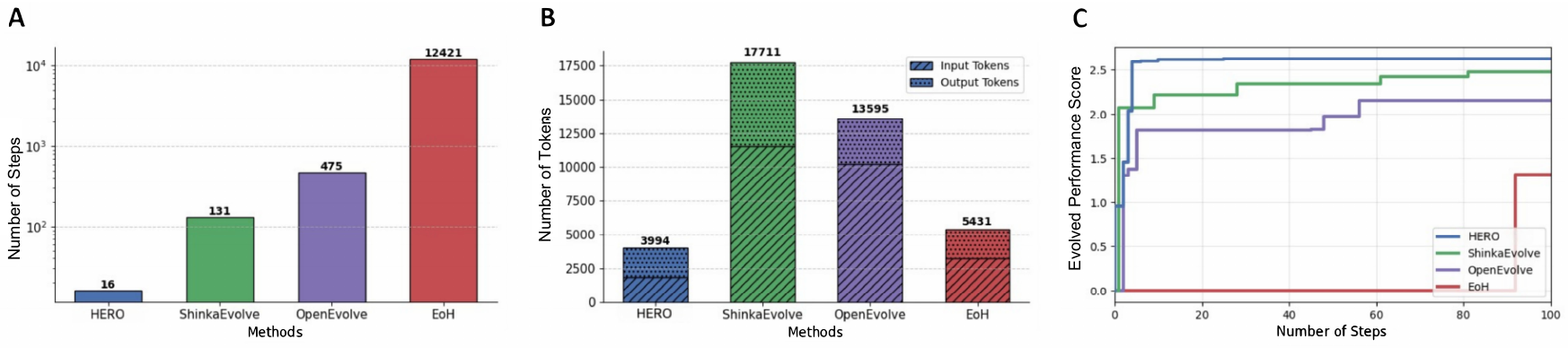}
    \caption{Results on the Circle Packing task. \textbf{A.} Number of steps required to reach a configuration with $\texttt{sum\_radii} > 2.60$. \textbf{B.} Average number of tokens consumed per evolution step. \textbf{C.} Optimization trajectories of different methods during the search.}
    \label{fig:circlepacking}
    \vspace{-0.05in}
\end{figure*}
Our experiments are designed to test the paper's central claim, not merely to report wins. If the bottleneck in LLM-driven optimization is composing edits rather than knowing directions, then a zeroth-order method that recombines edits should (1) converge faster, (2) cost fewer tokens, and (3) reach better final solutions than first-order baselines---and, critically, its advantage should \emph{grow} as the solution-to-performance mapping becomes harder to reason about, since that is where first-order credit assignment degrades. We therefore study four task families---Circle Packing, Strategy Games, Agent System Design, and Path Planning---chosen to span combinatorial construction, adversarial and partially observable decision making, agentic reasoning, and planning, so that they stress both the reasoning and the optimization capabilities of the LLM. Unless otherwise stated, we use Qwen-Plus~\citep{qwen-plus} as the base LLM.

\subsection{Circle Packing}

\textbf{Task Description.} \quad 
The circle packing problem aims to place 26 circles within a unit square while maximizing the sum of their radii, subject to non-overlapping constraints and the requirement that all circles remain inside the square.
The task blends discrete placement decisions with continuous radius adjustments, making it a challenging benchmark: it features numerous local optima, and naive methods often become trapped in suboptimal configurations that use space inefficiently. This task provides a suitable testbed for our claim because good moves (e.g., shifting a row, regrowing a radius) are locally describable, yet their joint effect on the global sum is highly coupled---precisely the regime where bundling edits should hurt.


\textbf{Convergence Speed.} \quad
We measure the number of steps each method needs to reach a target performance of $2.60$; results are summarized in Fig.~\ref{fig:circlepacking}A. \ourmethod{} reaches the target with substantially fewer steps than all baselines. The reason is structural rather than incidental: by recombining edits, each step retains locally beneficial change instead of discarding a whole batch whenever one edit misfires, so progress compounds across steps rather than stalling. This translates into strong sample efficiency, which matters most when interaction budgets are tight---a common constraint when deploying LLM-driven optimization under time or compute limits.

\textbf{Token Consumption.} \quad
Faster convergence would be of little practical value if each step were expensive, so we next measure the average token usage per step over a 100-step run (Fig.~\ref{fig:circlepacking}B); tokens directly reflect runtime cost in LLM-based applications. \ourmethod{} consistently requires the fewest tokens per iteration. The saving is a direct consequence of being zeroth-order: because \ourmethod{} neither feeds in scored examples nor asks the model to reason over performance, its prompts omit the long example contexts and chain-of-thought analyses that first-order methods rely on, and each update is produced in a single edit-generation pass. Efficiency here is thus not an engineering trick but a built-in benefit of removing first-order reasoning.

\textbf{Learning Curve.} \quad
To see \emph{how} the methods reach their solutions, we compare learning curves over 100 steps (Fig.~\ref{fig:circlepacking}C). \ourmethod{} shows a pronounced early acceleration, improving far sooner than competitors, which progress slowly and incrementally. This shape is exactly what our diagnosis predicts: the model already retrieves strong task-relevant moves from the outset, so once selection prevents good moves from being vetoed, large gains are available immediately rather than having to be slowly inferred from accumulated score history.

\subsection{Strategy Games}
\label{sec:exp-strategy}

The previous task is fully deterministic. We now raise the difficulty of credit assignment along two axes---long-horizon interaction and partial observability---using two games, which lets us test the prediction that \ourmethod's advantage widens as reasoning from scores gets harder.

\textbf{Othello.} \quad
Othello is a classic two-player deterministic board game on an $8 \times 8$ grid. At each turn a player places a disc to flip the opponent's pieces, and the outcome depends heavily on long-horizon positional planning. Solving the task requires evaluating board configurations, anticipating multi-step consequences, and countering an adaptive opponent, making it a strong benchmark for strategic reasoning and adversarial foresight. We use win-rate as the performance signal and design two opponents (details in Appendix); we denote the task against the hard opponent as Othello-H and against the easy one as Othello-E.

\textbf{Battleship.} \quad
Battleship is a probabilistic strategy game in which a player must locate hidden ships on a grid through a sequence of queries. Unlike Othello, the environment is uncertain and only partially observable: the agent must maintain and continuously update a belief over feasible ship configurations (details in Appendix). Strong performance hinges on probing strategies that maximize information gain, balancing exploration and exploitation, and revising plans as new evidence arrives. Performance is measured as the ratio of successful hits to total shots. We define two difficulty levels by grid size, Battleship-H (hard) and Battleship-E (easy).


\begin{table}[t]
\centering
\caption{Results on Strategy Games.}
\label{tab:strategy-games}
\resizebox{\columnwidth}{!}{
\begin{tabular}{lccccc}
\toprule
\textbf{Task} 
& \textbf{\ourmethod} 
& \textbf{ShinkaEvolve} 
& \textbf{OpenEvolve} 
& \textbf{EoH} 
& \textbf{Best of N} \\
\midrule
Othello-E      & \textbf{0.97} & 0.90 & 0.93 & 0.30 & 0.36 \\
Othello-H      & \textbf{0.85} & 0.10 & 0.15 & 0.00 & 0.06 \\
Battleship-E   & \textbf{0.34} & 0.18 & 0.20 & 0.02 & 0.08 \\
Battleship-H   & \textbf{0.28} & 0.14 & 0.14 & 0.00 & 0.04 \\
\midrule
\textbf{Average} & \textbf{0.61} & 0.33 & 0.36 & 0.08 & 0.14 \\
\bottomrule
\end{tabular}}
\end{table}

\textbf{Performance.} \quad We run the optimizer for 60 steps on Othello and 20 steps on Battleship, and report final scores in Table~\ref{tab:strategy-games}, comparing against EoH, Best-of-N, OpenEvolve, and ShinkaEvolve under identical settings. \ourmethod{} achieves the best performance on all four tasks. More telling than the wins is their pattern. On the easy variants the field is close (Othello-E: $0.97$ vs.\ $0.90$--$0.93$ for the evolutionary baselines), but on the hard variants the gap explodes: on Othello-H \ourmethod{} reaches $0.85$ while \emph{every} competitor collapses to at most $0.15$. This is the signature our thesis predicts. On Othello-E the score-to-performance mapping is benign enough that first-order reasoning still works, so all reasonable methods cluster together; on Othello-H the mapping becomes deeply entangled, first-order direction-finding degenerates, and only a method that bypasses it---recombining the model's prior knowledge---keeps improving. The same trend holds on the partially observable Battleship ($0.34$/$0.28$ vs.\ at most $0.20$/$0.14$). Averaged over the four tasks, \ourmethod{} scores $0.61$, well above OpenEvolve ($0.36$) and ShinkaEvolve ($0.33$); notably, the largest contribution to this margin comes from the hard settings, which is precisely where the value of avoiding first-order reasoning should be greatest.


\begin{figure}[t]
    \centering
    \includegraphics[
        width=0.5\textwidth,
        keepaspectratio
    ]{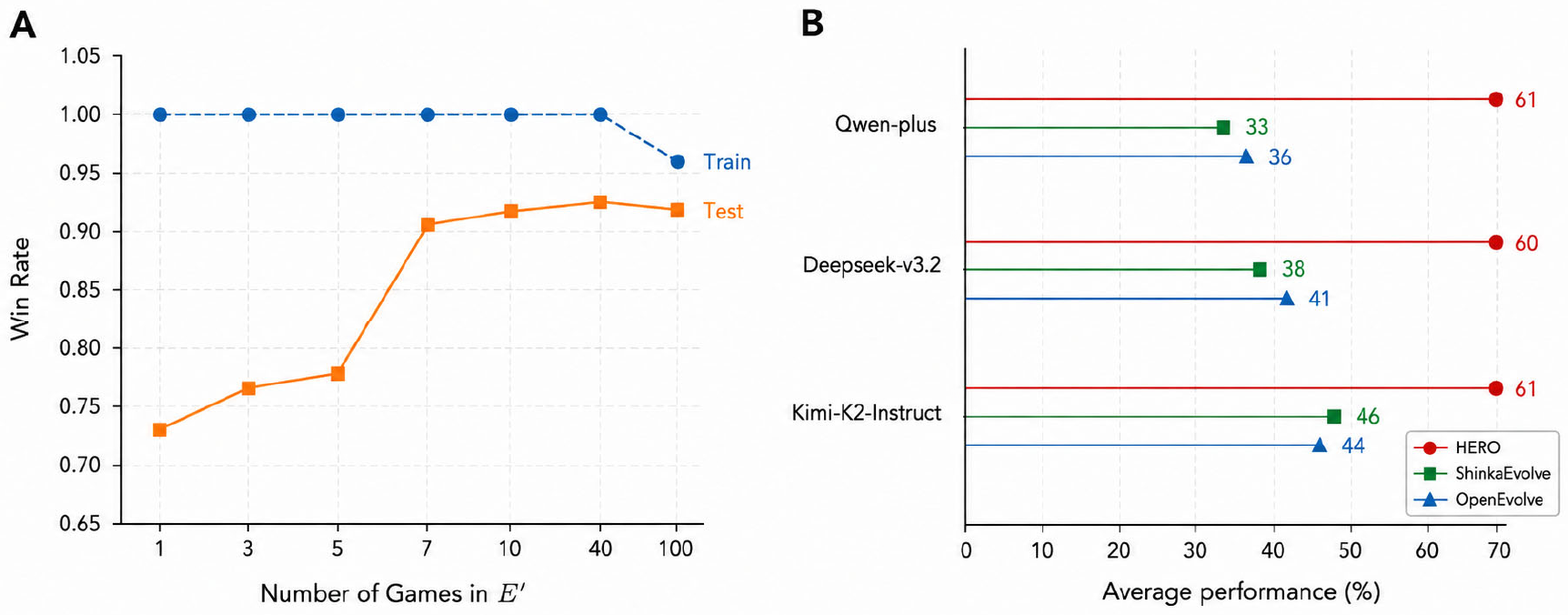}
    \caption{Analysis on Strategy Games. \textbf{A.} Effect of the surrogate evaluator $E'$ with different numbers of replicate games. \textbf{B.} Average performance of \ourmethod{}, ShinkaEvolve, and OpenEvolve across three LLMs.}
    \label{fig:rst-strategy}
\end{figure}

\textbf{Choice of Evaluator and Overfitting.} \quad A possible objection is that \ourmethod{} simply trades expensive reasoning for expensive evaluation, since selection (Eq.~\ref{eq:selectopm}) calls $f$ repeatedly. In Othello, the faithful evaluator runs 1{,}000 games per candidate, which is costly, so we replace it with a cheap surrogate $E'$ and ask how coarse $E'$ can be before selection suffers. Recall the method's claim that selection needs only the correct \emph{ranking} of subsets, not accurate absolute scores; Fig.~\ref{fig:rst-strategy}A tests this directly by sweeping the grain of $E'$ (number of games used to estimate the win-rate, horizontal axis) against win-rate (vertical axis). The training win-rate (blue) stays close to $1.0$ even with very few games, and the test win-rate (orange), though lower with a handful of games, rises rapidly and plateaus, with negligible gains beyond a small budget. The interpretation is that ranking is far more robust to estimation noise than the scores themselves: even a noisy $E'$ usually orders a clearly-good subset above a clearly-bad one, which is all selection requires. \ourmethod{} therefore attains accurate final performance without expensive large-scale evaluation, confirming that its reliance on $f$ is cheap in practice rather than a hidden cost.

\textbf{Choice of LLM.} \quad If the gains came from a quirk of one model, our claim about LLMs' latent knowledge would be weak; we therefore repeat the comparison across three base models in Fig.~\ref{fig:rst-strategy}B: Qwen-Plus, Deepseek-v3.2, and Kimi-K2-Instruct (full per-task breakdown in Appendix B). \ourmethod{} leads consistently---$61\%$ vs.\ $33\%$/$36\%$ on Qwen-Plus, $60\%$ vs.\ $38\%$/$41\%$ on Deepseek-v3.2, and $61\%$ vs.\ $46\%$/$44\%$ on Kimi-K2-Instruct---and its lead persists even on the weaker base model, indicating that the relevant directional knowledge is broadly present across models and that \ourmethod's role is to unlock it rather than to supply it. \ourmethod{} is also more stable across runs, whereas the baselines show larger variability and lower ceilings, consistent with the brittleness that all-or-nothing acceptance introduces.

\subsection{LLM Agent System Design}

The previous tasks optimize a single artifact. We now test whether the same mechanism improves an \emph{agent scaffold}, and whether the improvements transfer across model scales. We follow the protocol of \citet{lange2025shinkaevolve} on the AIME benchmark: the optimizer is run on AIME 2025~\citep{AIME2025} and evaluated on held-out AIME 2024~\citep{AIME2024} and AIME 2023~\citep{AIME2023}. Optimization runs for 10 steps using Qwen3-0.6B to generate candidate scaffolds; to probe generalization, we then evaluate the optimized scaffold on the larger Qwen3-7B model. Alongside accuracy, we report the average number of LLM queries per method as a proxy for computational cost.

Figure~\ref{fig:aime-tradeoff} shows that all optimization-based methods beat the base agent, lifting average accuracy from $18.5\%$ to over $27\%$; \ourmethod{} is best at $28.7\%$ ($33.3\%$ on 2024, $24.1\%$ on 2023). Two aspects support our thesis. First, the optimized scaffold, discovered on AIME 2025 with a tiny 0.6B model, still wins when transferred to a 7B model and to earlier years---improvements that are structural and model-agnostic, as expected if \ourmethod{} is selecting genuinely good design choices rather than overfitting to one evaluator. Second, the efficiency gap is large: \ourmethod{} needs only $2.15$ LLM calls on average, versus $4$ for ShinkaEvolve and $5$ for OpenEvolve, because it neither replays scored exemplars nor runs reasoning chains over them. The result is a favorable accuracy--efficiency trade-off that scales to practical settings.

         

\begin{figure}[t]
    \centering
    \includegraphics[width=0.8\linewidth]{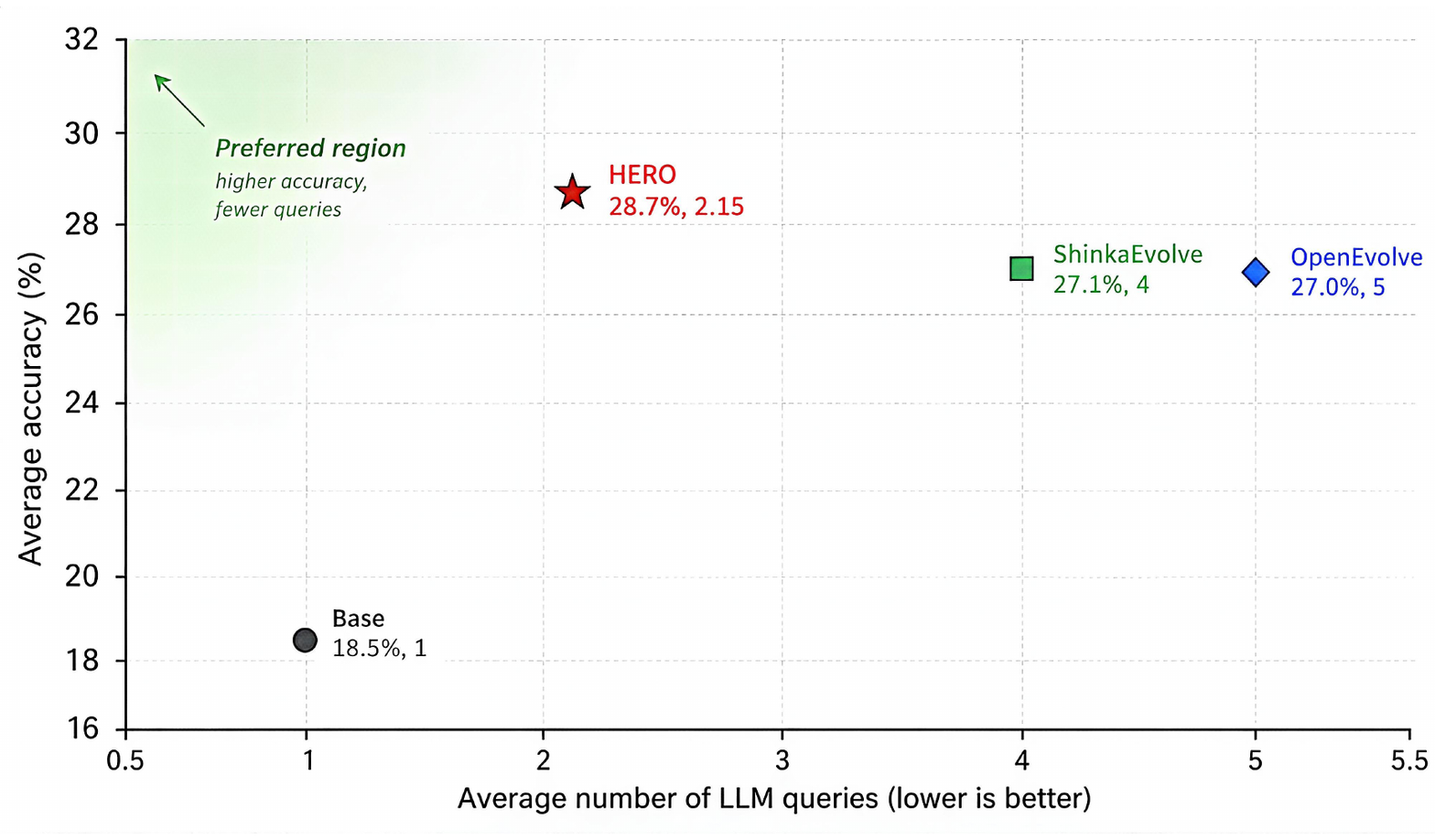}
    \caption{Accuracy--efficiency trade-off on AIME. Each point reports average accuracy over AIME 2024 and 2023 together with the average number of LLM queries. Points closer to the upper-left corner are preferred.}
    \label{fig:aime-tradeoff}
\end{figure}

\subsection{Path Planning}

Our final task moves from open-ended construction to a problem with a known optimum, which lets us separate \emph{solution quality} from \emph{search efficiency}. Grid-based robot path planning~\citep{wang2019path,wang2021path,jahanshahi2018robot} guides an agent from a start cell to a target cell on a two-dimensional lattice, where each cell is either traversable or occupied. The objective is to find a collision-free path of adjacent traversable cells that avoids all obstacles. Depending on the setting, movement may be restricted to four-connected neighborhoods (up, down, left, right) or extended to eight-connected motion including diagonals. Each step incurs a cost---uniform in simple settings, but variable when diagonal motion or terrain-dependent constraints are present---so the challenge is to reach the goal while minimizing cumulative path cost. Because the environment is fully discretized, the problem is naturally formulated as graph search, with grid cells as nodes and valid transitions as edges; classical algorithms such as A* find feasible, cost-optimal paths via heuristics and cost structures.

Figure~\ref{fig:path-planning} reports \textbf{path length} (solution quality) and \textbf{search complexity} (planning cost, measured by visited states), comparing algorithms designed by \ourmethod{} and OpenEvolve against A* and PPO. On path length all methods tie---length 33/34 in Canyon and 40 in Double Door---so optimality is not in question, and the comparison reduces to efficiency. There the methods diverge sharply: in Canyon, A* and PPO visit 264 and 256 states while \ourmethod{} visits only 142, and in the harder Double Door the gap widens further---A* and PPO need 320 and 314 while \ourmethod{} needs just 132, about a third of the classical cost. Tellingly, OpenEvolve, the other LLM optimizer, is the \emph{least} efficient (361 and 491): merely using an LLM does not buy efficiency. The advantage comes from \ourmethod{} composing the model's heuristic knowledge into a search procedure that prunes unnecessary exploration, and once again it grows with task difficulty.


\begin{figure}[t]
    \centering
    \includegraphics[width=0.86\linewidth]{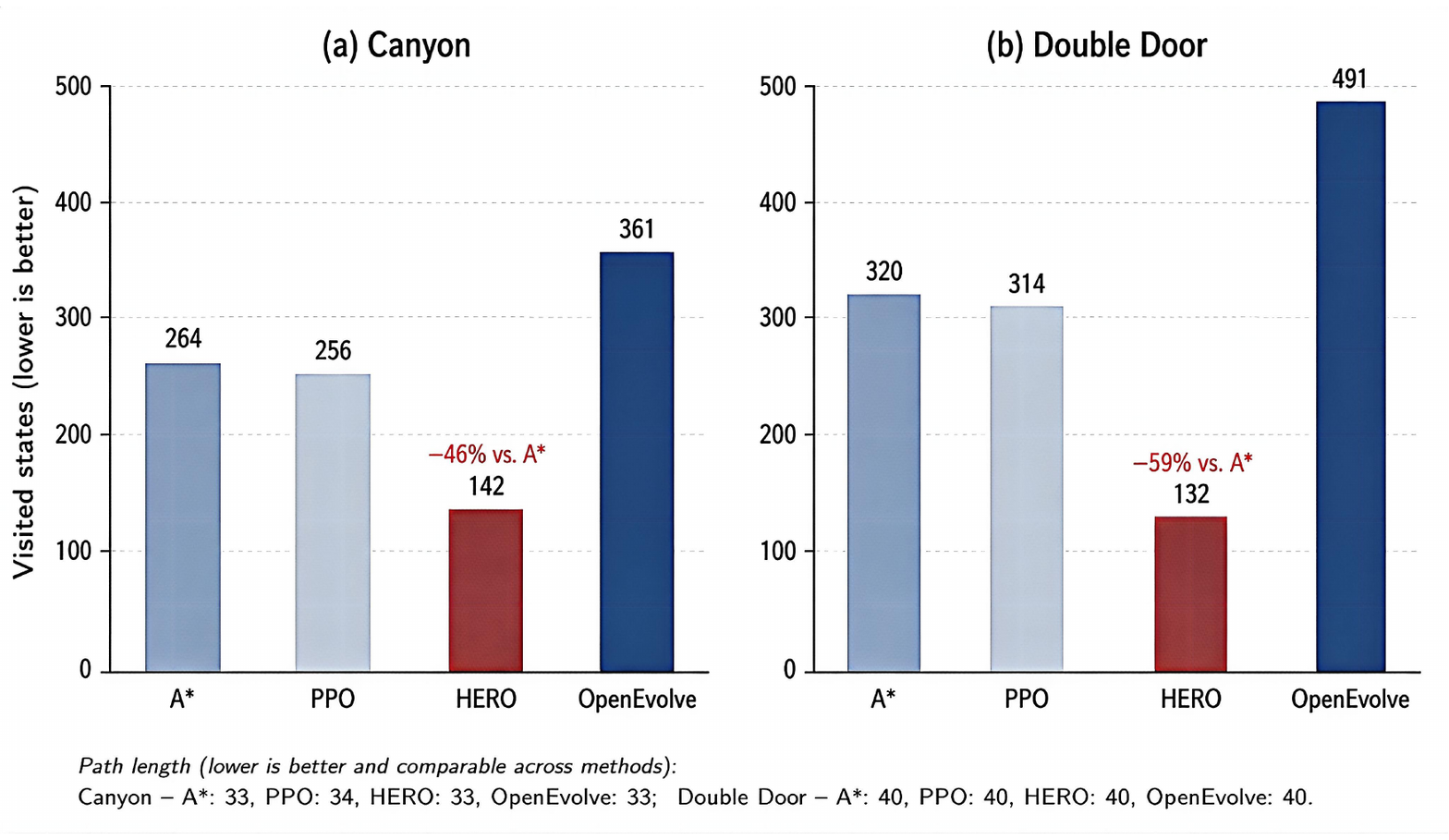}
    \caption{Path planning performance. All methods achieve comparable path quality, while \ourmethod{} substantially reduces search complexity measured by visited states. The path lengths are reported below each environment, and the bars show the number of visited states.}
    \label{fig:path-planning}
\end{figure}

\section{Analysis of \ourmethod}

Having shown \emph{that} \ourmethod{} works, we now examine \emph{why}, and whether its simplicity is a limitation or a feature. We analyze two representative game benchmarks, Othello and Battleship, which differ in decision structure and feedback: Othello emphasizes local strategy composition and long-horizon value evaluation in a sequential game, whereas Battleship emphasizes search and exploration under imperfect information. Studying both lets us probe \ourmethod{} across complementary, strategy-driven and search-driven regimes, and to ask whether common ``add-ons'' would improve it.



\begin{figure}[t]
    \centering
    \includegraphics[width=0.76\linewidth]{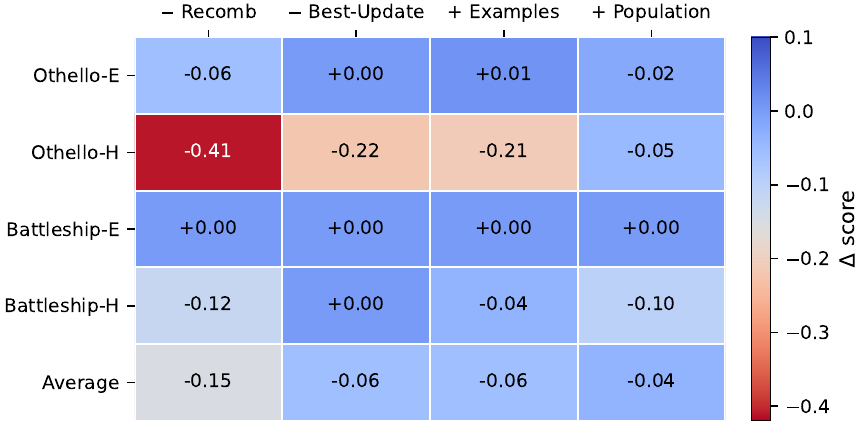}
    \caption{Ablation study on Strategy Games. Each cell reports the performance change relative to \ourmethod{}, computed as the variant score minus the full \ourmethod{} score. Negative values indicate performance drops compared with \ourmethod{}, while positive values indicate improvements.}
    \label{fig:ablation}
\end{figure}

\subsection{Ablation Study}
We first remove \ourmethod's two core components---the \emph{recombination mechanism} and the \emph{best-update mechanism}---to test whether each is necessary, and we report both alongside two common enhancements (in-context examples and a larger population) for context. All results are in Figure~\ref{fig:ablation}.

Removing recombination---reverting to applying the raw bundled output---is the most direct test of the weakest-link hypothesis. As shown in Fig.~\ref{fig:ablation}, this ablation causes the largest average drop, reducing the score by $0.15$ relative to the full \ourmethod{} configuration. The effect is especially pronounced on Othello-H, where performance drops from $0.85$ to $0.44$. Battleship-H also decreases from $0.28$ to $0.16$, while Battleship-E remains saturated at $0.34$. This pattern supports the claim that recombination is most valuable when the generated edit bundle is likely to contain both beneficial and harmful edits.

Removing best-update also weakens the method, but less severely. The average score decreases from $0.61$ to $0.56$, with the main loss appearing on Othello-H, where performance drops from $0.85$ to $0.63$. Battleship-H remains unchanged in the best-of-seeds score, but this does not negate the role of best-update; rather, it suggests that its stabilizing effect is most visible in strategy-driven settings such as Othello. Overall, recombination provides the main protection against weakest-link failures, while best-update stabilizes the trajectory around the best-so-far program.

\subsection{Optimality Verification}
A natural worry is that such a minimal design leaves easy gains on the table. We test this by adding two enhancements widely believed to help LLM optimizers---extra in-context examples and a larger program population---and asking whether either improves on the full method (Figure~\ref{fig:ablation}).


The answer is mixed at the individual-task level but clear at the average level. Adding examples slightly improves Othello-E from $0.97$ to $0.98$, but it hurts the harder settings, reducing Othello-H from $0.85$ to $0.64$ and Battleship-H from $0.28$ to $0.24$. Its average score is therefore $0.55$, below the full \ourmethod{} score of $0.61$. This suggests that additional examples can help in benign settings, but they do not provide a robust improvement across tasks.

Increasing the population shows a similar pattern. It remains competitive on Othello-H, reaching $0.80$, but it reduces Battleship-H from $0.28$ to $0.18$ and yields an average score of $0.57$, still below the full method. Thus, neither enhancement improves the overall accuracy--efficiency trade-off. The results support the design choice of keeping \ourmethod{} minimal: its gains come primarily from selectively recombining heterogeneous edits and anchoring the search to the best-so-far program, rather than from adding more context or maintaining a larger population.

\section{Conclusion}

We introduced \ourmethod{}, which improves programs by generating atomic edits without performance feedback and selecting their best recombination using scalar scores. This avoids unreliable score reasoning and the weakest-link effect.



\bibliography{aaai2027}


\input{appendix}

\end{document}

%% file: appendix.tex
\newpage
\appendix

\section{Related Works: Discussion on First-Order Information}

The contribution of this paper is best understood against the assumption it questions---that reasoning from ground-truth performance is necessary for LLM-based optimization. We therefore organize related work around \emph{first-order information}: how it is used, when it genuinely helps, and where it quietly breaks down.

\subsection{First-Order Information in LLM Optimizers}
Following the convention of numerical optimization, we treat the ground-truth performance of a solution as zeroth-order information and any \emph{reasoning} built on that performance as first-order information~\citep{yuksekgonul2025optimizing,zhanghessiangrad}. The idea that LLMs can act as optimizers dates back to early work showing they can iteratively improve solutions when shown past attempts and their scores~\citep{yang2023large}, and first-order information has since been viewed as the key signal for choosing the next direction.

Two lines of work operationalize this idea, and it is useful to see that they sit on a spectrum of how \emph{explicitly} they use first-order information. The first makes it explicit. TextGrad~\citep{yuksekgonul2025optimizing}, inspired by the autodiff mechanism of PyTorch~\citep{paszke2019pytorch}, asks the model to analyze its previous outputs against performance feedback and emit improvement suggestions that act as a ``textual gradient.'' The second uses it implicitly. Evolutionary LLM frameworks~\citep{novikov2025alphaevolve,FunSearchNature2024,liu2024evolution} maintain a population of scored candidates and sample parents from it~\citep{brahmachary2024leo,pourcel2025self}, so the optimizer need not fixate on the most recent solution; first-order reasoning still enters, however, because the model is shown several past solutions \emph{with their scores} and asked to reason across them when proposing new ones~\citep{guoconnecting,ye2024reevo}. Despite their differences, both lines share a premise: the model should look at performance and reason about it.

\subsection{Potential and Limitation of First-Order Information}

First-order information is genuinely useful when the link between a solution and its score is easy to reason about\cite{wang2023closing,fu2019recognition,shen2023structvpr,fu2023momentum,fu2024understanding,li2025regnav}. Its core function is to let the model associate a candidate with its performance and adjust accordingly; when that mapping is simple and near-decomposable, the association is reliable. In a poem-generation task~\citep{cheng2023llf}, for example, the model can directly see that a line is too long and shorten it. And in settings where the model has little intrinsic prior about the task~\citep{yang2023large}, first-order signals may be the \emph{only} source of direction, making them indispensable.

The picture changes when evaluation is a complex, multi-stage process---as in multi-step games~\citep{kuang2025learning,bosio2025synthesizing}. There the final score aggregates many interacting decisions, so the causal path from any single component to the outcome is long and entangled. Reasoning back from the score to ``what to fix'' becomes a credit-assignment problem with no unique answer, and the resulting direction is unreliable. This is the same failure our analysis in Appendix~\ref{app:fail-first-order} formalizes, and it is, paradoxically, the regime where strong optimization is most needed.

\subsection{Our Perspective}

We draw a different conclusion from these limitations. Rather than working harder to extract directions from first-order information, we observe that for many complex tasks the LLM already holds substantial internal knowledge about what makes a solution good, by virtue of understanding the task itself. When that is true, performance need not be \emph{reasoned about} at all; it need only serve as a scalar criterion for comparing candidates---zeroth-order information in our terminology. The remaining obstacle is then not knowing the direction but composing the model's many partial directions into a coherent improvement, which is exactly the weakest-link effect that \ourmethod{} addresses.

We stress what this perspective does and does not claim. We do not argue that first-order information is unimportant or obsolete; in low-prior tasks it remains essential. Rather, by showing how far zeroth-order information alone can go, we aim to surface an underappreciated strength---the knowledge already inside the model---and to prompt a more careful, cost-aware account of \emph{when} first-order reasoning is worth its overhead and \emph{how} it should be combined with the model's prior.

\section{Design Details}

We instantiate the two phases with concrete choices that preserve the abstractions above while keeping the method strictly zeroth-order.

\paragraph{Generation.}
We realize the generation query as $\mathrm{LLM}(\mathrm{Task},p)$: the model receives the task description and the current program $p$ only, with the performance of $p$ withheld. Following prior work, the task description carries the essential problem specification plus optional hints and is held \emph{fixed} throughout optimization, so that any change in behavior is attributable to the search rather than to a shifting prompt. Rather than regenerating the whole program, we adopt the \texttt{SEARCH}/\texttt{REPLACE} edit format of \citet{novikov2025alphaevolve}: the model emits several \texttt{SEARCH}/\texttt{REPLACE} pairs, each a small, localized modification. We constrain the \texttt{SEARCH} regions to be pairwise non-overlapping. This is a simple sufficient condition for the commutativity requirement~\eqref{eq:commute}---edits touching disjoint code regions cannot interfere---and it makes the resulting atomic edits genuinely heterogeneous, as the Generation phase demands.

\paragraph{Selection.}
Let $E$ denote the evaluator that returns the scalar performance used in place of $f$ (e.g., a test harness or simulator). Selection proceeds in two sequential steps.
\textit{(1) Error filtering.} We apply each edit $\alpha_i$ individually and discard any that triggers an execution error, yielding the valid set $A_{\mathrm{ne}}\subseteq A$. This removes obviously broken directions and, crucially, prevents a single invalid edit from contaminating every subset that would contain it.
\textit{(2) Performance optimization.} Over $A_{\mathrm{ne}}$ we solve Eq.~\eqref{eq:selectopm}. Exact selection is a subset search whose cost grows exponentially in $|A_{\mathrm{ne}}|$, so we match the budget to the regime: when $|A_{\mathrm{ne}}|$ is small or $E$ is cheap, we enumerate all subsets; when $|A_{\mathrm{ne}}|$ is large, we evaluate randomly sampled subsets; and when $E$ is expensive, we construct a cheap surrogate $E'\approx E$ and optimize against $E'$. The surrogate is sound for a specific reason: by Eq.~\eqref{eq:selectopm}, selection depends only on the relative \emph{ranking} of subsets, not on accurate absolute scores, so any coarse but approximately order-preserving $E'$ suffices. We verify this empirically in Section~\ref{sec:exp-strategy} via the evaluator-grain analysis.

\paragraph{Selection.}
Let $E$ denote the scalar evaluator used in place of $f$ (e.g., a test harness or
simulator). Selection uses this signal only as a comparison oracle and proceeds in two
steps.
\textit{(1) Error filtering.}
We first apply each edit $\alpha_i$ individually and discard any edit that causes an
execution error, obtaining the non-error set $A_{\mathrm{ne}}\subseteq A$. This prevents a
single invalid edit from corrupting every recombined subset that contains it.
\textit{(2) Budgeted subset ranking.}
Over $A_{\mathrm{ne}}$, exact selection would enumerate all subsets as in
Eq.~\eqref{eq:selectopm}, whose cost is exponential in $|A_{\mathrm{ne}}|$. We therefore
solve a budgeted approximation:
\begin{equation}
\label{eq:budgeted-selection}
\begin{gathered}
S^\star
\in
\arg\max_{S\in\mathcal{C}_B(A_{\mathrm{ne}})}
\widehat{E}_{m_S}\!\left(A_S(x_{t-1})\right), \\
\sum_{S\in\mathcal{C}_B(A_{\mathrm{ne}})} m_S \le B .
\end{gathered}
\end{equation}
where $\mathcal{C}_B(A_{\mathrm{ne}})\subseteq 2^{A_{\mathrm{ne}}}$ is the subset pool
considered under budget $B$, $m_S$ is the number of evaluator calls assigned to subset
$S$, and
$\widehat{E}_{m}(x)=\frac{1}{m}\sum_{r=1}^{m}\widetilde{E}_r(x)$ denotes the empirical
score estimated either with the faithful evaluator $E$ or, when evaluation is expensive,
with a cheaper surrogate $E'\approx E$. This formulation captures the practical
breadth--reliability trade-off: the budget can be spent on evaluating more subsets
(larger $\mathcal{C}_B$) or on reranking promising subsets more reliably (larger
$m_S$). When the budget is sufficient, $\mathcal{C}_B(A_{\mathrm{ne}})=2^{A_{\mathrm{ne}}}$
and Eq.~\eqref{eq:budgeted-selection} recovers exact selection; otherwise, we sample a
subset pool and optionally allocate additional evaluations to top-ranked candidates.
Because selection depends only on the relative ranking of subsets, an approximately
order-preserving surrogate is sufficient. In all cases, the performance signal is used
only for subset comparison: it is never provided to the LLM during edit generation and
is never used as reasoning context.

\section{A Analysis of First-Order Information}
\label{app:fail-first-order}

This appendix formalizes the claim made in the main text: once the mapping from a candidate solution to its performance is unrestricted, finitely many evaluations cannot determine the performance of an unseen candidate, so any direction inferred by reasoning over scores is unidentifiable. We proceed from definitions and assumptions to a core lemma, the main impossibility theorem, and two corollaries, and we close with remarks on intuition, tightness, and the implication for first-order optimization.

\subsection{Setup and Definitions}

Throughout, $\mathcal{X}$ denotes the space of candidate solutions and $\mathcal{Y}\subseteq\mathbb{R}^{+}$ the space of (performance) labels, with $|\mathcal{Y}|\ge 2$. We use $[k]=\{1,\dots,k\}$.

\begin{definition}[Score--decoder model]
\label{def:score-decoder}
Fix a measurable \emph{latent score} $s:\mathcal{X}\to\mathbb{R}$. Let $\mathcal{M}$ be the set of all measurable \emph{decoders} $\phi:\mathbb{R}\to\mathcal{Y}$. The induced hypothesis class is
\[
  \mathcal{F}_s \;=\; \{\, \phi\circ s \;:\; \phi\in\mathcal{M} \,\}.
\]
The target function $f^{\star}=\phi^{\star}\circ s\in\mathcal{F}_s$ is fixed but unknown; here the latent score $s$ is common to every hypothesis, and only the decoder $\phi^{\star}$ is unknown.
\end{definition}

\begin{definition}[Sample, observed scores, version space]
\label{def:consistency}
A size-$k$ sample is $S_k=\{(x_i,y_i)\}_{i=1}^{k}$ with $y_i=f^{\star}(x_i)$, and its \emph{observed score set} is $Z_{\mathrm{obs}}=\{s(x_i):i\in[k]\}\subset\mathbb{R}$. A decoder $\phi\in\mathcal{M}$ is \emph{consistent} with $S_k$ if $\phi(s(x_i))=y_i$ for all $i\in[k]$. The \emph{version space} is the set of all consistent decoders, $V(S_k)=\{\phi\in\mathcal{M}:\phi(s(x_i))=y_i\ \text{for all }i\in[k]\}$.
\end{definition}

We make the following assumptions explicit. Assumptions (A1)--(A3) underlie all results below; (A4) is invoked only for the probabilistic statement of Corollary~\ref{cor:as}.
\begin{itemize}
  \item[(A1)] \textbf{Unrestricted decoder.} $\mathcal{M}$ is the class of \emph{all} measurable maps $\mathbb{R}\to\mathcal{Y}$: no smoothness, monotonicity, or parametric restriction is imposed on $\phi$.
  \item[(A2)] \textbf{Latent observability.} The score $s$ is fixed and shared across $\mathcal{F}_s$, but its values are never observed directly; the learner observes only labels $y=\phi^{\star}(s(x))$.
  \item[(A3)] \textbf{Finite data.} The sample size $k$ is finite, so $Z_{\mathrm{obs}}$ is a finite subset of $\mathbb{R}$.
  \item[(A4)] \textbf{Non-atomic query scores.} When the query point $X_q$ is random, the law of $s(X_q)$ is non-atomic (e.g.\ it admits a density), so that $\Pr[s(X_q)=z]=0$ for every fixed $z\in\mathbb{R}$.
\end{itemize}

The next lemma isolates the construction on which all subsequent results rest: outside the finitely many observed scores, a measurable decoder may be assigned freely without violating consistency.

\begin{lemma}[Free assignment off the observed scores]
\label{lem:free-assign}
Assume \textnormal{(A1)--(A3)}. Let $z_1,\dots,z_m\in\mathbb{R}\setminus Z_{\mathrm{obs}}$ be distinct, and let $b_1,\dots,b_m\in\mathcal{Y}$ be arbitrary. Then there exists $\phi_{\mathrm{adv}}\in V(S_k)$ with $\phi_{\mathrm{adv}}(z_j)=b_j$ for every $j\in[m]$.
\end{lemma}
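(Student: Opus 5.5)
The plan is to build $\phi_{\mathrm{adv}}$ explicitly as a piecewise-constant decoder that takes finitely many prescribed values and a constant value everywhere else. First, the sample itself has to define a legitimate assignment on $Z_{\mathrm{obs}}$. Whenever $s(x_i)=s(x_j)$ we have $y_i=\phi^{\star}(s(x_i))=\phi^{\star}(s(x_j))=y_j$, because $f^{\star}=\phi^{\star}\circ s$. So the rule $z\mapsto y_i$ for $z=s(x_i)$ is a well-defined function $\psi:Z_{\mathrm{obs}}\to\mathcal{Y}$. This is the only place the structure of the sample enters. It also shows that $V(S_k)$ is nonempty, since $\phi^{\star}\in V(S_k)$.

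Next, fix any default label $y_0\in\mathcal{Y}$ and define
\[
\phi_{\mathrm{adv}}(z)=\begin{cases}\psi(z), & z\in Z_{\mathrm{obs}},\\ b_j, & z=z_j,\ j\in[m],\\ y_0, & \text{otherwise}.\end{cases}
\]
This is well defined because the $z_j$ are distinct and lie outside $Z_{\mathrm{obs}}$, so the three cases are disjoint and each point receives exactly one value. By construction $\phi_{\mathrm{adv}}(s(x_i))=y_i$ for all $i\in[k]$, and $\phi_{\mathrm{adv}}(z_j)=b_j$.

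It remains to check measurability, which I expect to be the only step needing care. The set $F=Z_{\mathrm{obs}}\cup\{z_1,\dots,z_m\}$ is finite by (A3). Hence $\phi_{\mathrm{adv}}$ is a finite linear combination of indicators of singletons, plus $y_0\mathbf{1}_{\mathbb{R}\setminus F}$. Singletons and their finite unions and complements are Borel, so for any measurable $B\subseteq\mathcal{Y}$ the preimage $\phi_{\mathrm{adv}}^{-1}(B)$ is a finite set, or a finite set together with $\mathbb{R}\setminus F$, and is therefore Borel. Thus $\phi_{\mathrm{adv}}\in\mathcal{M}$ by (A1).

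Consistency and membership in $\mathcal{M}$ together give $\phi_{\mathrm{adv}}\in V(S_k)$. Assumption (A2) is not needed for the construction itself. It only justifies reading the lemma as a statement about what the learner cannot rule out.
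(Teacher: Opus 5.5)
Your proof is correct and follows essentially the same construction as the paper: the assignment on $Z_{\mathrm{obs}}$ is well defined because $y_i=\phi^{\star}(s(x_i))$, you prescribe $b_j$ at the $z_j$ and a default $y_0$ elsewhere, and measurability holds because the result is a simple function. The only difference is cosmetic: the paper assigns $b_j$ on pairwise-disjoint open intervals $I_j\ni z_j$ avoiding $Z_{\mathrm{obs}}$ rather than at the single points $z_j$, and your singleton version is equally valid and slightly simpler.
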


\begin{proof}
We construct $\phi_{\mathrm{adv}}$ directly and verify each required property in turn.

\emph{(i) Consistency on observed scores.} For $z\in Z_{\mathrm{obs}}$, define $\phi_{\mathrm{adv}}(z)=y_i$ for any index $i\in[k]$ with $s(x_i)=z$. This is well defined: if $s(x_i)=s(x_{i'})$, then by Definition~\ref{def:score-decoder} $y_i=\phi^{\star}(s(x_i))=\phi^{\star}(s(x_{i'}))=y_{i'}$, so the assigned value is independent of the chosen index. By construction $\phi_{\mathrm{adv}}(s(x_i))=y_i$ for all $i\in[k]$, which is exactly the consistency condition of Definition~\ref{def:consistency}.

\emph{(ii) Prescribing the target values.} By (A3), $Z_{\mathrm{obs}}$ is finite and hence closed, and the points $z_1,\dots,z_m\in\mathbb{R}\setminus Z_{\mathrm{obs}}$ are distinct. We may therefore choose pairwise-disjoint open intervals $I_1,\dots,I_m$ with $z_j\in I_j$ and $I_j\cap Z_{\mathrm{obs}}=\emptyset$ for every $j\in[m]$. Set $\phi_{\mathrm{adv}}(z)=b_j$ for all $z\in I_j$.

\emph{(iii) Extension and measurability.} Fix any $y_0\in\mathcal{Y}$ and set $\phi_{\mathrm{adv}}(z)=y_0$ for $z\in\mathbb{R}\setminus\bigl(Z_{\mathrm{obs}}\cup\bigcup_{j=1}^{m} I_j\bigr)$. Then $\phi_{\mathrm{adv}}$ is constant on each of finitely many measurable pieces---a finite point set, finitely many disjoint open intervals, and their complement---hence is a simple function and therefore measurable; by (A1), $\phi_{\mathrm{adv}}\in\mathcal{M}$.

By (i) the decoder is consistent, so $\phi_{\mathrm{adv}}\in V(S_k)$; by (ii) and $z_j\in I_j$ we have $\phi_{\mathrm{adv}}(z_j)=b_j$ for every $j\in[m]$.
\end{proof}

\subsection{Impossibility Results}

\begin{theorem}[Pointwise unidentifiability]
\label{thm:main}
Assume \textnormal{(A1)--(A3)}. Let $x_q\in\mathcal{X}$ be a query point with unobserved latent score, i.e.\ $s(x_q)\notin Z_{\mathrm{obs}}$. Then for every target label $y_{\mathrm{target}}\in\mathcal{Y}$ there exists a hypothesis $f_{\mathrm{adv}}=\phi_{\mathrm{adv}}\circ s\in\mathcal{F}_s$ that is consistent with $S_k$ and satisfies $f_{\mathrm{adv}}(x_q)=y_{\mathrm{target}}$. Consequently $f^{\star}(x_q)$ is not determined by $S_k$: no algorithm that observes only $S_k$ can exclude any label in $\mathcal{Y}$ as the value at $x_q$.
\end{theorem}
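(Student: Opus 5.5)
The theorem follows almost immediately from Lemma~\ref{lem:free-assign}, applied with a single off-sample point. Set $m=1$, $z_1=s(x_q)$ and $b_1=y_{\mathrm{target}}$. The hypothesis $s(x_q)\notin Z_{\mathrm{obs}}$ is exactly the requirement of the lemma that $z_1\in\mathbb{R}\setminus Z_{\mathrm{obs}}$. Distinctness is vacuous for $m=1$.

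The lemma then supplies a decoder $\phi_{\mathrm{adv}}\in V(S_k)$ with $\phi_{\mathrm{adv}}(s(x_q))=y_{\mathrm{target}}$. Define $f_{\mathrm{adv}}=\phi_{\mathrm{adv}}\circ s$. By (A1), $\phi_{\mathrm{adv}}\in\mathcal{M}$, so $f_{\mathrm{adv}}\in\mathcal{F}_s$ by Definition~\ref{def:score-decoder}. Membership in the version space gives $f_{\mathrm{adv}}(x_i)=\phi_{\mathrm{adv}}(s(x_i))=y_i$ for all $i\in[k]$, so $f_{\mathrm{adv}}$ is consistent with $S_k$. Finally, $f_{\mathrm{adv}}(x_q)=y_{\mathrm{target}}$ by construction.

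For the ``consequently'' clause, I would argue by indistinguishability. Any algorithm that observes only $S_k$ produces the same output whether the true target is $f^{\star}$ or any $f_{\mathrm{adv}}$ as above. The reason is that $S_k$ consists only of the pairs $(x_i,y_i)$, and these coincide under both hypotheses; by (A2) the latent score values are never revealed. Since $y_{\mathrm{target}}$ ranged over all of $\mathcal{Y}$, every label is realized at $x_q$ by some hypothesis consistent with the data. So no label can be excluded. Because $|\mathcal{Y}|\ge 2$, $f^{\star}(x_q)$ is not determined by $S_k$.

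There is no real obstacle here. The only point needing care is the precise meaning of ``observes only $S_k$.'' I would formalize an algorithm as a map from samples to outputs, so its behavior depends on the hypothesis only through the realized sample. With that reading, the step is a one-line indistinguishability remark, and all the measure-theoretic work has already been done in Lemma~\ref{lem:free-assign}.
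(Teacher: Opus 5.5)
Your proposal is correct and matches the paper's proof: apply Lemma~\ref{lem:free-assign} with $m=1$, $z_1=s(x_q)$, $b_1=y_{\mathrm{target}}$, then conclude by the indistinguishability of $f^{\star}$ and $f_{\mathrm{adv}}$ on $S_k$. Your explicit appeal to $|\mathcal{Y}|\ge 2$ for the ``not determined'' clause is a small point the paper's proof leaves implicit.
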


\begin{proof}
We argue directly using Lemma~\ref{lem:free-assign}, then conclude by a two-hypotheses (indistinguishability) argument. Applying Lemma~\ref{lem:free-assign} with the single point $z_1=s(x_q)\in\mathbb{R}\setminus Z_{\mathrm{obs}}$ and value $b_1=y_{\mathrm{target}}$ yields a decoder $\phi_{\mathrm{adv}}\in V(S_k)$ with $\phi_{\mathrm{adv}}(s(x_q))=y_{\mathrm{target}}$. The hypothesis $f_{\mathrm{adv}}=\phi_{\mathrm{adv}}\circ s$ then lies in $\mathcal{F}_s$ by Definition~\ref{def:score-decoder}, is consistent with $S_k$ because $\phi_{\mathrm{adv}}\in V(S_k)$ (Definition~\ref{def:consistency}), and satisfies $f_{\mathrm{adv}}(x_q)=\phi_{\mathrm{adv}}(s(x_q))=y_{\mathrm{target}}$.

For the final claim, suppose some algorithm, on input $S_k$, asserted $f^{\star}(x_q)\neq y$ for a particular $y\in\mathcal{Y}$. Instantiating the construction with $y_{\mathrm{target}}=y$ produces $f_{\mathrm{adv}}\in\mathcal{F}_s$ consistent with $S_k$ and with $f_{\mathrm{adv}}(x_q)=y$. The two hypotheses $f^{\star}$ and $f_{\mathrm{adv}}$ agree on all of $S_k$ and are thus indistinguishable to any procedure whose input is $S_k$; since $f_{\mathrm{adv}}(x_q)=y$ is admissible, the assertion $f^{\star}(x_q)\neq y$ is unjustified. As $y\in\mathcal{Y}$ was arbitrary, no label can be excluded.
\end{proof}

The barrier extends from single labels to \emph{comparisons}, which is the form first-order reasoning actually relies on.

\begin{corollary}[Preference unidentifiability]
\label{cor:pref}
Assume \textnormal{(A1)--(A3)}. Let $x',x''\in\mathcal{X}$ have distinct, unobserved latent scores, i.e.\ $s(x')\neq s(x'')$ and $s(x'),s(x'')\notin Z_{\mathrm{obs}}$. Then both strict orderings are realizable by consistent hypotheses: there exist $f_1,f_2\in\mathcal{F}_s$, each consistent with $S_k$, such that $f_1(x')<f_1(x'')$ and $f_2(x')>f_2(x'')$. Hence no algorithm observing only $S_k$ can determine which of $x',x''$ has the larger label.
\end{corollary}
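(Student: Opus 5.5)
The plan is to apply Lemma~\ref{lem:free-assign} twice, each time with two prescribed points, and then finish with the same indistinguishability argument used for Theorem~\ref{thm:main}.

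\emph{Choosing the labels.} Since $|\mathcal{Y}|\ge 2$ and $\mathcal{Y}\subseteq\mathbb{R}^{+}$ is totally ordered, pick two labels $a,b\in\mathcal{Y}$ with $a<b$.

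\emph{Constructing the two hypotheses.} Set $z_1=s(x')$ and $z_2=s(x'')$. By hypothesis these are distinct and both lie in $\mathbb{R}\setminus Z_{\mathrm{obs}}$, so the conditions of Lemma~\ref{lem:free-assign} with $m=2$ hold.
\begin{itemize}
\item With $(b_1,b_2)=(a,b)$, the lemma gives $\phi_1\in V(S_k)$ with $\phi_1(z_1)=a$ and $\phi_1(z_2)=b$.
\item With $(b_1,b_2)=(b,a)$, it gives $\phi_2\in V(S_k)$ with $\phi_2(z_1)=b$ and $\phi_2(z_2)=a$.
\end{itemize}
Define $f_j=\phi_j\circ s$. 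By Definition~\ref{def:score-decoder}, each $f_j$ lies in $\mathcal{F}_s$. Each is consistent with $S_k$ because $\phi_j\in V(S_k)$. Finally, $f_1(x')=a<b=f_1(x'')$ and $f_2(x')=b>a=f_2(x'')$.

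\emph{Concluding.} Both $f_1$ and $f_2$ agree with $S_k$, so no procedure whose only input is $S_k$ can tell them apart. Such a procedure's verdict on which of $x',x''$ has the larger label must therefore be wrong under one of the two hypotheses. Both lie in $\mathcal{F}_s$, so either could be the true $f^\star$.

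\emph{The one point to check.} The only step that needs care is that Lemma~\ref{lem:free-assign} is applied with two distinct off-sample points. This is exactly where the assumption $s(x')\neq s(x'')$ is used. If instead $s(x')=s(x'')$, every $f\in\mathcal{F}_s$ would satisfy $f(x')=f(x'')$, and neither strict ordering could be realized. Everything else is routine.
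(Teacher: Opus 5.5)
Your proposal is correct and matches the paper's proof essentially step for step: fix $a<b$ in $\mathcal{Y}$, apply Lemma~\ref{lem:free-assign} with $m=2$ at $z_1=s(x')$, $z_2=s(x'')$ using $(b_1,b_2)=(a,b)$ and then $(b,a)$, and conclude that the data cannot distinguish the two hypotheses. Your extra remark on why $s(x')\neq s(x'')$ is needed (every $f=\phi\circ s$ would otherwise give $f(x')=f(x'')$) is correct, though the paper leaves it implicit.
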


\begin{proof}
Since $|\mathcal{Y}|\ge2$, fix $a,b\in\mathcal{Y}$ with $a<b$. The points $z_1=s(x')$ and $z_2=s(x'')$ are distinct and lie in $\mathbb{R}\setminus Z_{\mathrm{obs}}$, so Lemma~\ref{lem:free-assign} applies with $m=2$. Choosing $(b_1,b_2)=(a,b)$ gives a consistent decoder whose hypothesis $f_1$ satisfies $f_1(x')=a<b=f_1(x'')$; choosing $(b_1,b_2)=(b,a)$ gives a consistent $f_2$ with $f_2(x')=b>a=f_2(x'')$. Both hypotheses lie in $\mathcal{F}_s$ and are consistent with $S_k$ by Lemma~\ref{lem:free-assign}, so the observed data are compatible with either ordering.
\end{proof}

Theorem~\ref{thm:main} is stated for a fixed query point satisfying the genericity condition $s(x_q)\notin Z_{\mathrm{obs}}$. We now show that, for a randomly drawn query, this condition holds almost surely.

\begin{corollary}[Almost-sure unidentifiability]
\label{cor:as}
Assume \textnormal{(A1)--(A4)} and let the query $X_q$ be drawn independently of $S_k$ from a distribution whose pushforward under $s$ is non-atomic. Then $s(X_q)\notin Z_{\mathrm{obs}}$ with probability one, and hence the conclusion of Theorem~\ref{thm:main} holds almost surely.
\end{corollary}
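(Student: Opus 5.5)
The plan is to condition on the sample $S_k$ and reduce the claim to the non-atomicity assumption (A4), then invoke Theorem~\ref{thm:main} pointwise on the resulting full-probability event.

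\emph{Step 1: the bad event.} Fix a realization of $S_k$. By (A3), the observed score set $Z_{\mathrm{obs}}=\{z^{(1)},\dots,z^{(r)}\}$ is finite, with $r\le k$. The bad event is
\[
\{s(X_q)\in Z_{\mathrm{obs}}\}=\bigcup_{j=1}^{r}\{s(X_q)=z^{(j)}\}.
\]

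\emph{Step 2: the conditional probability vanishes.} Because $X_q$ is independent of $S_k$, the conditional law of $s(X_q)$ given $S_k$ is still its unconditional law, which is non-atomic by (A4). Each point $z^{(j)}$ is fixed under this conditioning, so $\Pr[s(X_q)=z^{(j)}\mid S_k]=0$. Countable (indeed finite) subadditivity then gives
\[
\Pr[s(X_q)\in Z_{\mathrm{obs}}\mid S_k]\le\sum_{j=1}^{r}0=0.
\]
Taking expectations over $S_k$ yields $\Pr[s(X_q)\in Z_{\mathrm{obs}}]=0$. Equivalently, one can apply Fubini to the product measure of $(S_k,X_q)$: the indicator of $\{s(X_q)\in Z_{\mathrm{obs}}(S_k)\}$ has zero inner integral for every fixed $S_k$.

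\emph{Step 3: apply Theorem~\ref{thm:main}.} On the complementary event, which has probability one, we have $s(X_q)\notin Z_{\mathrm{obs}}$. Hence Theorem~\ref{thm:main} applies with $x_q=X_q$. For every $y_{\mathrm{target}}\in\mathcal{Y}$ it produces a consistent $f_{\mathrm{adv}}$ with $f_{\mathrm{adv}}(X_q)=y_{\mathrm{target}}$. This is the almost-sure conclusion.

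\emph{Main obstacle.} The only delicate point is measurability. We need the event $\{s(X_q)\in Z_{\mathrm{obs}}\}$ to be measurable in the joint space, and we need the conditioning to be legitimate. I would handle this by writing the event as the finite union $\bigcup_{i\le k}\{s(X_q)=s(x_i)\}$. Each piece is the preimage of the diagonal under the measurable map $(s(X_q),s(x_i))$, and the diagonal is closed, so each piece is measurable. Independence then lets Fubini finish the argument without any regular conditional probabilities.
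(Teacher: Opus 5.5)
Your proposal is correct and follows the paper's proof. The shared argument has three steps: $Z_{\mathrm{obs}}$ is finite by (A3); a union bound over the at most $k$ observed scores together with non-atomicity (A4) gives $\Pr[s(X_q)\in Z_{\mathrm{obs}}]=0$; and Theorem~\ref{thm:main} then applies verbatim on the complementary probability-one event. Your conditioning on $S_k$ via independence (or Fubini on the product measure), and your measurability check of $\{s(X_q)=s(x_i)\}$, only make explicit what the paper's one-line bound $\Pr[s(X_q)=s(x_i)]=0$ silently uses when the $s(x_i)$ are themselves random.
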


\begin{proof}
By (A3) the set $Z_{\mathrm{obs}}=\{s(x_1),\dots,s(x_k)\}$ is finite. By (A4) and a union bound,
\[
\Pr\bigl[s(X_q)\in Z_{\mathrm{obs}}\bigr]\;\le\;\sum_{i=1}^{k}\Pr\bigl[s(X_q)=s(x_i)\bigr]\;=\;0 .
\]
On the complementary event $\{s(X_q)\notin Z_{\mathrm{obs}}\}$, which therefore has probability one, Theorem~\ref{thm:main} applies verbatim.
\end{proof}

\begin{remark}[Why the result holds, and why \textnormal{(A1)} is essential]
The obstruction is information-theoretic rather than computational: by Lemma~\ref{lem:free-assign} the labels constrain the decoder only on the finite set $Z_{\mathrm{obs}}$, leaving its values on $\mathbb{R}\setminus Z_{\mathrm{obs}}$ entirely unconstrained, so the version space $V(S_k)$ remains rich enough to realize any label at an unseen query. Assumption (A1) is precisely what enables this: with an unrestricted decoder, a constraint at an observed score cannot propagate to a nearby unobserved one. The result is tight in this sense---any restriction that couples nearby scores defeats the construction. If, for instance, $\phi$ were $L$-Lipschitz, then $|f_{\mathrm{adv}}(x_q)-y_i|\le L\,|s(x_q)-s(x_i)|$ for the nearest anchor $x_i$, so labels near observed scores could no longer be set arbitrarily; monotone or finite-dimensional parametric decoder families likewise restore identifiability. This delineates exactly when reasoning from performance is trustworthy: only when the score-to-performance map is suitably constrained.
\end{remark}

\begin{remark}[Implication for first-order optimization]
In our setting the label $y=f(x)$ is the \emph{performance} of candidate $x$, and the unknown decoder $\phi$ models a complex, unconstrained map from latent quality $s(x)$ to observed performance. Corollary~\ref{cor:pref} then states precisely that finite evaluation history cannot, in general, reveal which of two unseen candidates performs better. Because first-order reasoning operates by inferring such comparisons from observed $(x_i,y_i)$ pairs, it is unreliable in exactly this regime---which motivates \ourmethod's use of performance as a direct scalar selection criterion rather than as a signal to reason from.
\end{remark}

\begin{remark}[On the original probabilistic formulation; flagged correction]
\label{rem:flag}
An earlier statement of this result asserted that the conclusion holds ``with probability at least $1-\epsilon$ over the sampling of $S_k$'' for every $\epsilon>0$. We flag this as imprecise and have restated it above rather than retaining it, for two reasons. First, the relevant randomness is not the draw of $S_k$ but the position of the query score $s(X_q)$ relative to the finite set $Z_{\mathrm{obs}}$; conditioned on $S_k$, the construction is deterministic (Theorem~\ref{thm:main}) once $s(x_q)\notin Z_{\mathrm{obs}}$. Second, the construction in fact yields probability \emph{one}, not merely $1-\epsilon$, but only under a non-atomicity assumption on the score distribution that was absent from the original hypotheses; we have made it explicit as (A4) and isolated the probabilistic claim in Corollary~\ref{cor:as}. Without (A4) the claim can genuinely fail: if the law of $s$ has an atom and $s(x_q)=s(x_i)$ for some $i$, then consistency forces $f(x_q)=y_i$, the label \emph{is} identified, and no adversarial decoder exists. The genericity condition $s(x_q)\notin Z_{\mathrm{obs}}$ is therefore necessary, not cosmetic.
\end{remark}

\subsection{Experiments}

To further demonstrate that first-order information is little used in practice, we conduct a controlled experiment. We use OpenEvolve as the base optimizer and iteratively remove the in-context examples that carry first-order signal, observing how performance changes. We find that removing these examples leaves performance largely unaffected, indicating that the optimizer relies far less on first-order information than commonly assumed.



\section{Extra Results}
\label{app:extra-results}

This section reports the full per-task results underlying the cross-model study summarized in Section~\ref{sec:exp-strategy} (Fig.~\ref{fig:rst-strategy}B). To confirm that \ourmethod's advantage is not an artifact of a single backbone, we repeat the Strategy Games protocol of Section~\ref{sec:exp-strategy} on two additional base models---DeepSeek-v3.2 (Table~\ref{tab:strategy-games-deepseek}) and Moonshot-Kimi-K2-Instruct (Table~\ref{tab:strategy-games-kimi})---comparing against the two strongest evolutionary baselines, ShinkaEvolve and OpenEvolve, under identical settings (60 steps on Othello, 20 on Battleship). Together with the Qwen-Plus results in Table~\ref{tab:strategy-games}, these span three independently developed model families.

\begin{table}[t]
\centering
\caption{Per-task results on Strategy Games using DeepSeek-v3.2. The best result in each row is in \textbf{bold}.}
\label{tab:strategy-games-deepseek}
\begin{tabular}{lccc}
\toprule
\textbf{Task} & \textbf{\ourmethod}  & \textbf{ShinkaEvolve} & \textbf{OpenEvolve} \\
\midrule
Othello-E     & 0.94 & 0.94 & \textbf{0.96} \\
Othello-H     & \textbf{0.95} & 0.28 & 0.26 \\
Battleship-E  & \textbf{0.32} & 0.20 & 0.30 \\
Battleship-H  & \textbf{0.16} & 0.10 & 0.10 \\
\midrule
\textbf{Average} & \textbf{0.60} & 0.38 & 0.41 \\
\bottomrule
\end{tabular}
\end{table}
\begin{table}[t]
\centering
\caption{Per-task results on Strategy Games using Moonshot-Kimi-K2-Instruct. The best result in each row is in \textbf{bold}.}
\label{tab:strategy-games-kimi}
\begin{tabular}{lccc}
\toprule
\textbf{Task} & \textbf{\ourmethod}  & \textbf{ShinkaEvolve} & \textbf{OpenEvolve} \\
\midrule
Othello-E     & \textbf{0.95} & 0.94 & 0.94 \\
Othello-H     & \textbf{0.93} & 0.60 & 0.65 \\
Battleship-E  & \textbf{0.36} & 0.18 & 0.12 \\
Battleship-H  & \textbf{0.18} & 0.10 & 0.06 \\
\midrule
\textbf{Average} & \textbf{0.61} & 0.46 & 0.44 \\
\bottomrule
\end{tabular}
\end{table}

Two observations reinforce the conclusions of the main text. First, \ourmethod{} attains the best average win-rate on both backbones---$0.60$ versus $0.41$ (OpenEvolve) and $0.38$ (ShinkaEvolve) on DeepSeek-v3.2, and $0.61$ versus $0.44$ and $0.46$ on Kimi-K2-Instruct---mirroring the $0.61$ versus $0.36$/$0.33$ margin on Qwen-Plus (Table~\ref{tab:strategy-games}). The directional knowledge \ourmethod{} exploits is therefore broadly present across model families rather than peculiar to one. Second, the easy--hard pattern recurs within each backbone: on the easy variants (Othello-E, Battleship-E) the methods are close, and a baseline can occasionally edge ahead (e.g., OpenEvolve reaches $0.96$ on Othello-E with DeepSeek-v3.2), whereas on the hard variants the gap widens sharply---on Othello-H, \ourmethod{} scores $0.95$ and $0.93$ while no baseline exceeds $0.28$ and $0.65$, respectively. This is the same signature predicted by our analysis and observed on Qwen-Plus: the benefit of bypassing first-order reasoning is largest exactly where the score-to-performance mapping is hardest to invert, and it holds regardless of the underlying model.

\section{Detail of Task Design}

\subsection{Othello Game}
\label{subsec:app_othello_games}
\begin{figure}
    \centering
    \includegraphics[width=0.79\linewidth]{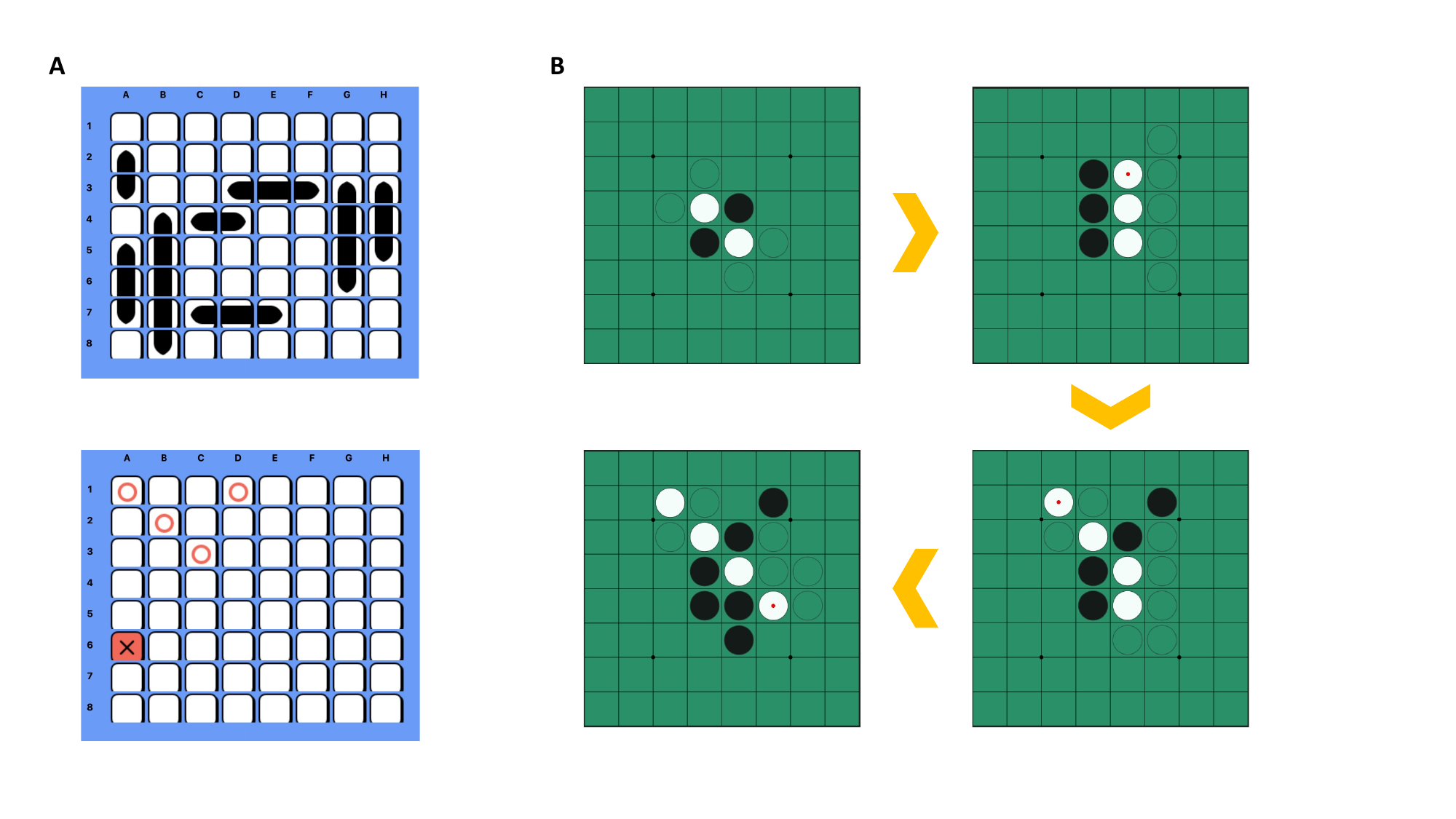}
    \caption{Examples of Battleship and Othello Games.}
    \label{fig:intro-strategy}
\end{figure}

Othello, also known as Reversi, is a two-player deterministic board game that combines simple operational rules with substantial strategic depth. The game is played on a square board of fixed size, traditionally an $8 \times 8$ grid, and uses a set of identical discs that are colored black on one side and white on the other. Each disc represents ownership by one of the two players, and ownership may change during the course of the game through a well-defined flipping mechanism. The fundamental objective of the game is to finish with a greater number of discs of one's own color on the board than the opponent.

At the beginning of the game, the board is initialized with four discs placed at its center(Figure \ref{fig:intro-strategy}B). Two black discs and two white discs occupy these central squares in a diagonal configuration. All remaining squares on the board are empty. One player is assigned the black discs and always moves first, while the other player is assigned the white discs and moves second. Gameplay proceeds in alternating turns. On each turn, a player places a disc of their own color onto an empty square of the board, subject to strict legality conditions. A move is legal if and only if the placement of the disc results in at least one of the opponent’s discs being captured (the dashed circles in Figure \ref{fig:intro-strategy}  are valid movement for the black player.). Capture occurs when a contiguous line of one or more opponent discs becomes sandwiched between the newly placed disc and another disc of the current player’s color. This sandwiching must occur along a straight line in one of the eight possible directions defined by the board geometry, namely horizontal, vertical, or diagonal directions. If no such line exists, the move is illegal and cannot be played. 

If a player has no legal moves available on their turn, they are required to pass. Passing does not involve placing a disc or altering the board state, and the turn is immediately transferred to the opponent. The game continues as long as at least one player has a legal move available. The game terminates when the board is completely filled with discs or when both players are unable to make a legal move consecutively. These termination conditions ensure that the game always reaches a finite conclusion. At the end of the game, scoring is performed by counting the number of discs of each color present on the board. Each disc contributes one point to the owning player’s score. The player with the higher score is declared the winner. In the case where both players have the same number of discs, the game ends in a draw. Notably, victory is determined solely by the final configuration of the board rather than intermediate advantages accrued during play.

In the Othello game setting, we design two types of opponents, denoted as the ``easy'' and ``hard'' agents, each implementing distinct strategies with different levels of tactical sophistication.  The LLM is tasked with developing new strategies that can consistently outperform these opponents. This setup encourages the LLM not only to discover effective local tactics but also to synthesize higher-level strategic patterns that generalize across different styles of adversaries.

\subsection{Battleship}
\label{app:battleship}
This game presents the agent with a two-dimensional grid map on which a fixed fleet of ships is secretly deployed~(Figure \ref{fig:intro-strategy} A). The agent has no prior knowledge of the ships’ exact positions, their shapes, or how many squares each ship occupies. On every turn the agent selects a single cell to “shoot.” Immediately after the shot is fired the environment returns a reward signal: a positive value if a ship was hit and zero otherwise. No other information—such as the identity of the struck ship, its orientation, or how many of its remaining segments are still afloat—is revealed.

The agent’s task is to infer, shot by shot, the most probable locations of the hidden vessels and to maximize the total number of hits (or minimize the number of shots required to sink every ship). Because the only feedback is the sparse binary reward, the agent must maintain an internal belief state—typically a probability distribution over all possible ship configurations—and update that belief using the history of past shots and their outcomes. Efficient exploration, information gain, and probabilistic reasoning are therefore central to strong performance. Performance is evaluated as the ratio of successful hits to the total number of shots taken.